\newtheorem{theorem}{Theorem}
\newtheorem{remark}{Remark}
\definecolor{codegreen}{rgb}{0,0.6,0}
\definecolor{codegray}{rgb}{0.5,0.5,0.5}
\definecolor{codepurple}{rgb}{0.58,0,0.82}
\definecolor{backcolour}{rgb}{0.95,0.95,0.92}
\lstdefinestyle{mystyle}{
    backgroundcolor=\color{backcolour},   
    commentstyle=\color{codegreen},
    keywordstyle=\color{magenta},
    numberstyle=\tiny\color{codegray},
    stringstyle=\color{codepurple},
    basicstyle=\ttfamily\footnotesize,
    breakatwhitespace=false,         
    breaklines=true,                 
    captionpos=b,                    
    keepspaces=true,                 
    numbersep=5pt,                  
    showspaces=false,                
    showstringspaces=false,
    showtabs=false,                  
    tabsize=2
}
\newcommand{\x}{\bar{x}}
\newcommand{\X}{\bar{X}}
\begin{document}

\title{\LARGE \bf Equality Constrained Diffusion for Direct Trajectory Optimization}

\author{Vince Kurtz and Joel W. Burdick
\thanks{The authors are with the Department of Civil and Mechanical Engineering, California Institute of Technology, \texttt{\{vkurtz,jwb\}@caltech.edu}}}

\maketitle
\thispagestyle{empty}

\begin{abstract}
The recent success of diffusion-based generative models in image and natural language processing has ignited interest in diffusion-based trajectory optimization for nonlinear control systems. Existing methods cannot, however, handle the nonlinear equality constraints necessary for direct trajectory optimization. As a result, diffusion-based trajectory optimizers are currently limited to shooting methods, where the nonlinear dynamics are enforced by forward rollouts. This precludes many of the benefits enjoyed by direct methods, including flexible state constraints, reduced numerical sensitivity, and easy initial guess specification. In this paper, we present a method for diffusion-based optimization with equality constraints. This allows us to perform direct trajectory optimization, enforcing dynamic feasibility with constraints rather than rollouts. To the best of our knowledge, this is the first diffusion-based optimization algorithm that supports the general nonlinear equality constraints required for direct trajectory optimization.
\end{abstract}

\section{Introduction and Related Work}\label{sec:intro}

Many optimal control and planning methods involve a discrete-time trajectory optimization of the form
\begin{subequations}\label{eq:ocp}
\begin{align}
    \min_{x_k, u_k} ~& \phi(x_K) + \sum_{k=0}^{K-1} \ell(x_k, u_k), \\
    \mathrm{s.t.} ~& x_{k+1} = f(x_k, u_k), \label{eq:ocp:dyn} \\
                   & x_0 = x_{init},\label{eq:ocp:init}
\end{align}
\end{subequations}
where $x_k$ represent states, $u_k$ are control inputs, $\ell$ and $\phi$ are running and terminal costs, $f$ encodes the nonlinear system dynamics, and the initial condition $x_{init}$ is fixed.

Such trajectory optimization problems arise in the context of motion planning, where offline solutions provide a dynamically feasible reference trajectory. They can also form the basis of Model Predictive Control (MPC), where \eqref{eq:ocp} is solved in receding horizon fashion, with the initial condition \eqref{eq:ocp:init} updated online with the latest state estimate.

\begin{figure}
    \centering
    \includegraphics[width=0.8\linewidth]{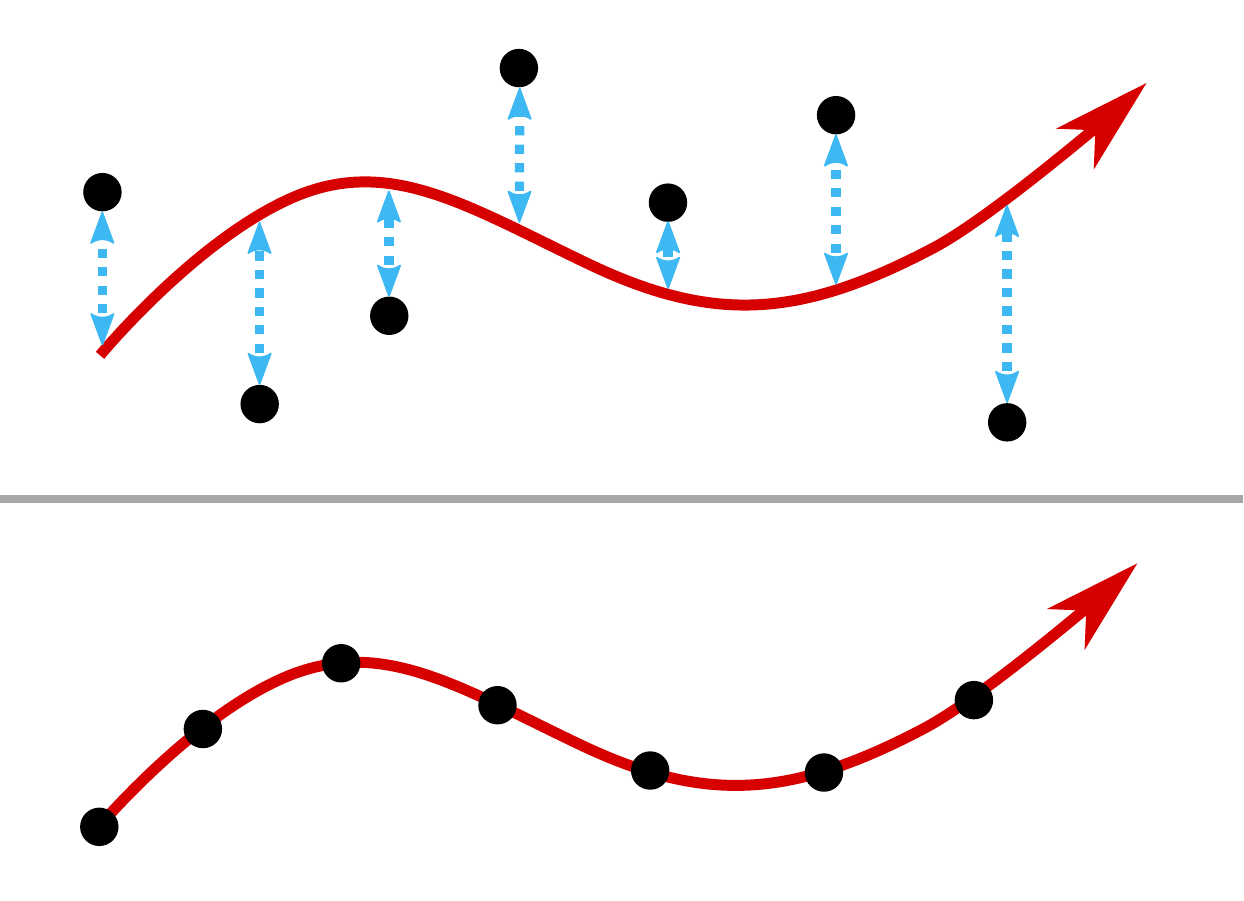}
    \caption{Cartoon illustration of our diffusion-based direct trajectory optimization. In early iterations (top), dynamic feasibility is not required, enabling efficient exploration of the cost landscape. Equality constraints (blue dashed lines) pull the decision variables toward feasibility during the diffusion process, resulting in a final trajectory (bottom) that respects the system dynamics (red curve).}
    \label{fig:hero}
\end{figure}

There are many ways to solve this sort of trajectory optimization \cite{betts1998survey}. Broadly speaking, \textit{shooting methods} eliminate the dynamics constraint $\eqref{eq:ocp:dyn}$ by writing $x_k$ in terms of $u_{0:k-1}$. This results in an unconstrained problem, where each evaluation of the cost involves a forward rollout of the dynamics. On the other hand, \textit{direct methods} use both $x_k$ and $u_k$ as decision variables, enforcing the dynamics with constraints. Direct methods typically leverage a general-purpose non-linear programming (NLP) solver to enforce these constraints, although domain-specific solvers can often obtain improved performance in practice \cite{erez2012trajectory, posa2014direct, kurtz2023inverse}.

While both shooting and direct methods can be effective \cite{betts1998survey}, the direct approach offers several notable advantages. Direct methods are less sensitive than shooting methods. For example, the long-term behavior of a second-order mechanical system changes dramatically with small changes to control torques early in the trajectory. This manifests as a sensitive and poorly-conditioned shooting problem, making optimization difficult. Direct methods do not have this issue, as some level of constraint violation is allowed in early iterations. Direct methods also make it easier to specify an initial guess, support additional state constraints out of the box, and enable more flexible exploration during the optimization process \cite{hargraves1987direct}.

In this paper, we focus in particular on \textit{diffusion-based trajectory optimization}. Diffusion-based trajectory optimization and planning has recently surged in popularity \cite{xue2024full, pan2024model, li2024efficient, carvalho2023motion, jiang2023motiondiffuser,janner2022planning,liang2023adaptdiffuser,ajay2022conditional, song2023loss}, fueled by the success of diffusion-based generative models in computer vision and natural language processing \cite{song2019generative, song2020score ,ho2020denoising, yang2023diffusion}. These techniques frame optimization as a process of sampling from an energy-based probability distribution. This reduces sensitivity to local minima and enables tight connections with behavior cloning techniques like diffusion policy \cite{chi2023diffusion}. 

Nonetheless, existing methods focus primarily on a shooting approach to the optimization problem \cite{pan2024model, xue2024full}, due to the difficulty of enforcing nonlinear dynamics constraints like \eqref{eq:ocp:dyn} in a diffusion process. While various methods for constrained diffusion have been proposed in the literature, including barrier methods \cite{fishman2024diffusion}, reflection methods \cite{fishman2024diffusion, lou2023reflected}, metropolis sampling \cite{fishman2024metropolis}, and L1 penalties \cite{li2024efficient}, none of these methods can handle the (possibly difficult) nonlinear dynamics constraints required for direct trajectory optimization. In addition to high nonlinearity, such equality constraints have a low probability measure, making it exceedingly unlikely that noisy samples, which are key to this method's success, will satisfy the constraint. Approaches that project to the feasible manifold at each step \cite{lelievre2012langevin} come closest to meeting the needs of direct trajectory optimization, but our aim is to enforce dynamics constraints only at convergence rather than at each step of the diffusion process. As we will see, this additional flexibility allows for more effective exploration of the solution space, and can reduce vulnerability to local minima. 

We present a diffusion-based optimization framework that can enforce the difficult nonlinear equality constraints required for direct trajectory optimization. Our proposed approach is heavily inspired by constrained differential optimization \cite{platt1987constrained}, and results in a Langevin diffusion process for both the decision variables and Lagrange multipliers. In addition to proving theoretical convergence properties, we introduce an open-source JAX \cite{jax2018github} implementation and provide a detailed discussion of limitations and areas for future work. To the best of our knowledge, this is the first diffusion-based optimization method that handles general nonlinear equality constraints. 

The remainder of this paper is organized as follows. We present a formal problem statement in Section~\ref{sec:problem_formulation}, followed by background on Langevin dynamics and diffusion-based optimization in Section~\ref{sec:background}. Our main result, a diffusion-based method for equality constrained non-convex optimization, is presented in Section~\ref{sec:main}. We introduce a prototype python/JAX software implementation in Section~\ref{sec:software}, and illustrate the performance of the method with several numerical examples. We discuss the benefits and drawbacks of the approach in Section~\ref{sec:discussion}, and conclude with Section~\ref{sec:conclusion}.
\section{Problem Formulation}\label{sec:problem_formulation}

%

In this paper, we consider generic equality-constrained non-linear programs (NLPs) of the following form:
\begin{subequations}\label{eq:nlp}
\begin{align}
    \min_{\x} ~& c(\x), \\
    \mathrm{s.t.} ~& h(\x) = 0, \label{eq:nlp:constraint} \\
                   & \x_L \leq \x \leq \x_U, \label{eq:nlp:bound}
\end{align}
\end{subequations}
where $\x \in \mathbb{R}^n$ are the decision variables, $c : \mathbb{R}^n \to \mathbb{R}$ is a cost function, $h : \mathbb{R}^n \to \mathbb{R}^m$ is a vector of constraints, and $\x_L, \x_U \in [\mathbb{R} \cup \{-\infty, \infty\}]^n$ are lower and upper bounds. We assume that $c(\cdot)$ and $h(\cdot)$ are twice-continuously differentiable, \eqref{eq:nlp:constraint}-\eqref{eq:nlp:bound} are feasible, and the NLP is not unbounded.

Clearly, the direct trajectory optimization problem \eqref{eq:ocp} is a special case of this NLP, where we have
\begin{gather}
    \x = [u_0, \dots, u_{K-1}, x_0, \dots, x_K],
\end{gather}
\begin{gather}
    c(\x) = \phi(x_K) + \sum_{k=0}^{K-1} \ell(x_k, u_k), \\
    h(\x) = \begin{bmatrix} \vdots \\ x_{k+1} - f(x_k, u_k) \\ \vdots \\ x_0 - x_{init} \end{bmatrix}.
\end{gather}
Additionally, inequality constraints of the form $g(\x) \leq 0$ can be trivially added with the addition of slack variables \cite{nocedal1999numerical}.

Nonetheless, direct trajectory optimization problems are somewhat unique among NLPs in the sense that \textit{the constraints are the hard part}. While the cost $c(\x)$ is typically composed of convex quadratic tracking terms, the constraints $h(\x)$ encode the nonlinear system dynamics. In this paper, we present a diffusion-based optimization framework that can handle such nonlinear equality constraints.

\section{Background}\label{sec:background}

\subsection{Langevin Sampling}\label{sec:background:langevin}

The theory of Langevin sampling \cite{roberts1996exponential} grounds diffusion-based generative models \cite{song2019generative, song2020score, ho2020denoising, yang2023diffusion}. In this setting, we generate samples from a probability distribution
\begin{equation}
    Y \sim p(Y)
\end{equation}
via a Langevin stochastic differential equation (SDE),
\begin{equation}
    dY_t = \frac{1}{2}\nabla_y \log p(Y_t) dt + dW_t,
\end{equation}
where $W_t$ denotes a standard Wiener process. We typically simulate this SDE with an Euler-Maruyama discretization:
\begin{equation}
    y_{i+1} \gets y_i + \alpha \nabla_y \log p(y_i) + \sqrt{2\alpha} \epsilon_i, \quad \epsilon_i \sim \mathcal{N}(0, I).
\end{equation}
\begin{remark}
    Throughout this paper, we use capital letters (e.g., $Y$) to refer to random variables and lower-case (e.g., $y$) to refer to realizations and real-valued variables.
\end{remark}
The \textit{score} $\nabla_y \log p(y)$ holds particular importance. Learning-based generative modeling techniques train a neural network to approximate the score of a data distribution \cite{ho2020denoising,song2020score,song2019generative}, while sampling-based trajectory optimization methods like model-predictive path integral control (MPPI) \cite{williams2017model} approximate a score with model rollouts \cite{xue2024full}.

\subsection{Unconstrained Diffusion-Based Optimization}\label{sec:background:unconstrained}

Diffusion-based generative modeling is growing in prominence as a tool for unconstrained optimization, e.g.,
\begin{equation}\label{eq:unconstrained_opt}
    \min_{\x} c(\x).
\end{equation}
In this setting, we define a time-varying target distribution
\begin{equation}
    p_t(\X) = \frac{1}{\eta}\exp\left(- c(\X)/\sigma(t)^2\right),
\end{equation}
where $\sigma(t)$ decays to zero and $\eta$ is a normalizing constant. The corresponding Langevin dynamics are given by
\begin{equation}
    d\X_t = -\frac{1}{2}\frac{\nabla_{\x} c(\X_t)}{\sigma(t)^2} dt + dW_t,
\end{equation}
where the score $\nabla_{\x} \log p_t(\X) = -\nabla_{\x} c(\X) / \sigma(t)^2$ is independent of the normalizing constant $\eta$ \cite{song2019generative}. Furthermore, this SDE can be equivalently expressed as
\begin{equation}
    d\X_t = -\frac{1}{2}\nabla_{\x} c(\X) dt + \sigma(t) dW_t,
\end{equation}
emphasizing that the noise level decays with the decreasing magnitude of $\sigma(\cdot)$ over the course of the diffusion process.

Annealing the noise level $\sigma(\cdot)$ concentrates probability mass at global minimizers of $c(\x)$:
\begin{theorem}[\cite{pan2024model}]
    Let $c(\cdot)$ be such that level sets of $c(\x)$ satisfy the volumetric conditions\footnote{These conditions establish that the volume of sub-level sets of $c$ must be bounded above and below by polynomial inequalities, and essentially establish that $c$ is well-behaved and $\x^*$ exists. We refer the interested reader to \cite{pan2024model} for further details.} of \cite[Proposition 2]{pan2024model}, and let $\X \sim \frac{1}{\eta}\exp(-c(\X)/\sigma^2)$ for some $\sigma \geq 0$. Assume without loss of generality that $\min_{\x} c(\x) = 0$ and $\x^* = \arg \min c(\x)$. Then $c(\X)$ converges in probability to $c(\x^*)$ as $\sigma \to 0$.
\end{theorem}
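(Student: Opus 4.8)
The plan is to unpack ``convergence in probability'' directly. Since we have normalized $\min_{\x} c(\x) = 0$, the quantity $c(\X)$ is nonnegative and $c(\x^*) = 0$, so it suffices to show that for every fixed $\epsilon > 0$ we have $\Pr(c(\X) \geq \epsilon) \to 0$ as $\sigma \to 0$. Writing the Gibbs density explicitly, this probability is the ratio
\begin{equation}
    \Pr(c(\X) \geq \epsilon) = \frac{\int_{\{c(\x) \geq \epsilon\}} \exp(-c(\x)/\sigma^2)\, d\x}{\int_{\mathbb{R}^n} \exp(-c(\x)/\sigma^2)\, d\x},
\end{equation}
so the task reduces to showing that this ratio vanishes.

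First I would collapse both integrals to one dimension over the level value $a = c(\x)$. Introducing the sub-level set volume $V(a) = \mathrm{vol}(\{\x : c(\x) \leq a\})$, the layer-cake (co-area) representation gives $\int \exp(-c/\sigma^2)\, d\x = \int_0^\infty \exp(-a/\sigma^2)\, dV(a)$, with the numerator restricted to $a \geq \epsilon$. This is exactly where the volumetric conditions of \cite[Proposition 2]{pan2024model} enter: they supply polynomial upper and lower bounds on $V(a)$, which guarantee both that $V$ is well-defined and finite and that the resulting Stieltjes integrals converge.

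Next I would bound numerator and denominator separately. For the denominator, I would lower-bound the mass near $a = 0$ using the polynomial lower bound on $V$, yielding a term scaling like a positive power of $\sigma^2$ (a standard Laplace/Watson-type estimate). For the numerator, every contributing level satisfies $a \geq \epsilon$, so I would factor out $\exp(-\epsilon/\sigma^2)$ and bound the remainder by a polynomial factor in $\sigma$ using the polynomial upper bound on $V$ together with its tail behavior. Dividing, the ratio is bounded above by a polynomial in $\sigma$ and $1/\sigma$ multiplied by $\exp(-\epsilon/\sigma^2)$, which tends to $0$ as $\sigma \to 0$ because the exponential suppression dominates any polynomial factor. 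Since $\epsilon > 0$ was arbitrary, convergence in probability follows.

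The main obstacle is the asymptotic bookkeeping rather than the conceptual structure: I must verify that the denominator does not decay faster than the numerator, i.e., that the polynomial-in-$\sigma$ denominator estimate genuinely dominates, and that the tail of the integrand as $a \to \infty$ contributes negligibly so the Stieltjes integrals are finite. Both of these are precisely what the volumetric hypotheses are designed to control, so the crux is invoking them carefully to pin down the Laplace asymptotics of the normalizing integral; once those bounds are in hand, the $\exp(-\epsilon/\sigma^2)$ factor closes the argument.
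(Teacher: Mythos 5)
This theorem is stated in the paper as a background result quoted from \cite{pan2024model}; the paper itself gives no proof, deferring entirely to \cite[Proposition 2]{pan2024model}, so there is no in-paper argument to compare yours against. Your proposal is the standard Laplace-type concentration argument for Gibbs measures, and it is essentially the route the cited reference takes: reduce convergence in probability to showing $\Pr(c(\X) \geq \epsilon) \to 0$, express that probability as a ratio of integrals, and beat a polynomially small (in $\sigma$) lower bound on the denominator with an exponentially suppressed numerator, the volumetric hypotheses supplying exactly the two volume bounds needed. One technical point to tighten: after factoring $\exp(-\epsilon/\sigma^2)$ out of the numerator, the remaining Stieltjes integral $\int_{a \geq \epsilon} \exp(-(a-\epsilon)/\sigma^2)\, dV(a)$ is not obviously a ``polynomial factor in $\sigma$''; the clean fix is to split the exponent and factor out only $\exp(-\epsilon/(2\sigma^2))$, keeping $\exp(-a/(2\sigma^2))$ inside, so that integration by parts against the polynomial upper bound on $V(a)$ yields a factor that stays $O(1)$ as $\sigma \to 0$, after which the ratio is bounded by $C\,\sigma^{-2q}\exp(-\epsilon/(2\sigma^2)) \to 0$ and the argument closes exactly as you describe.
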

This property relates diffusion-based optimization closely with Langevin simulated annealing \cite{borysenko2021coolmomentum, chak2023generalized}, a global optimization method that enjoys precisely characterized convergence properties \cite{bras2024convergence}.

\section{Main Results}\label{sec:main}

In this section we present our main result: an extension of diffusion-based optimization to equality-constrained NLPs \eqref{eq:nlp}.

To begin, we note that there are many well-established methods for dealing with the bound constraints $\x_L \leq \x \leq \x_U$. These include log-barrier methods \cite{nocedal1999numerical, fishman2024diffusion}, reflection methods \cite{lou2023reflected}, and metropolis corrections \cite{fishman2024metropolis}. This allows us to focus narrowly on equality constrained problems of the form
\begin{subequations}\label{eq:simple_constrained_opt}
\begin{align}
    \min_{\x} &~ c(\x), \\
    \mathrm{s.t.} &~ h(\x) = 0,
\end{align}
\end{subequations}
where, again, the inequality constraints $h(\x)$ capture the nonlinear system dynamics and present the central challenge.

An obvious approach, following Section~\ref{sec:background:unconstrained}, would be to define a constrained target distribution,
\begin{equation}
    p_t(\X) = \frac{1}{\eta} \exp\left(-c(\X)/\sigma(t)^2\right) \mathbb{1}_{h(\X) = 0},
\end{equation}
where $\mathbb{1}_{h(\X) = 0} = \begin{cases}
    1 & h(\X) = 0 \\
    0 & \mathrm{otherwise}
\end{cases}$ enforces the constraint. But this approach introduces several issues. For one, the score $\nabla_{\x} \log p_t(\X)$ is not well-defined for this distribution. A sampling-based approximation, as pursued in \cite{pan2024model}, might provide a way forward mathematically, but obtaining feasible realizations via random sampling is extremely unlikely.

Instead, we propose generating solutions to \eqref{eq:simple_constrained_opt} by simulating the following diffusion process:
\begin{subequations}\label{eq:constrained_diffusion}
\begin{empheq}[box=\fbox]{align}
    & \begin{multlined}
    d\X_t = - \frac{1}{2}\Big[ \nabla_{\x} c(\X_t) + \lambda_t^T \nabla_{\x} h(\X_t) \\ \quad\quad\quad\quad + \mu \nabla_{\x} h(\X_t)^T h(\X_t)\Big]dt + \sigma(t) dW_t,
    \end{multlined} \label{eq:constrained_diffusion:x}\\
    & d\lambda_t = \mu h(\X_t) dt. \label{eq:constrained_diffusion:lambda}
\end{empheq}
\end{subequations}
Here $\mu > 0$ is a positive constant, $\sigma(t)$ is a decaying noise schedule, and $\lambda_t$ takes the role of a Lagrange multiplier.

This diffusion process is a stochastic extension of constrained differential optimization \cite{platt1987constrained}, and is analogous to an augmented Lagrangian method \cite{nocedal1999numerical}. Specifically, the drift term in \eqref{eq:constrained_diffusion:x} is the gradient of an augmented-Lagrangian-style cost
\begin{equation}
    c(\x) + \lambda^Th(\x) + \frac{\mu}{2} \|h(\x)\|_2^2,
\end{equation}
where the value of the Langrange multipliers $\lambda$ are determined adaptively via \eqref{eq:constrained_diffusion:lambda}.

The convergence of this SDE can be characterized more formally as follows:

\begin{theorem}\label{thm:convergence}
    Let $\X_t$ follow the diffusion process \eqref{eq:constrained_diffusion} with penalty coefficient $\mu > 0$ and noise schedule $\sigma: \mathbb{R}^+ \to \mathbb{R}^+$ such that $\lim_{t \to \infty} \sigma(t) = 0$. Then there exists a $\mu^*$ such that if $\mu > \mu^*$, $\X_t$ converges to a constrained minimizer of \eqref{eq:simple_constrained_opt} in probability as long as $\X_t$ remains bounded and in a region $R$ surrounding a constrained minimizer. Additionally, as $\mu \to \infty$, $\X_t$ is globally convergent in probability to a constrained minimizer of \eqref{eq:simple_constrained_opt}.
\end{theorem}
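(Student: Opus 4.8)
The plan is to analyze the noise-free skeleton of \eqref{eq:constrained_diffusion} first and then layer the annealing schedule on top. Setting $\sigma(t) = 0$, the drift in \eqref{eq:constrained_diffusion:x} is exactly $-\tfrac{1}{2}\nabla_{\x}\mathcal{L}_\mu(\X_t,\lambda_t)$ for the augmented Lagrangian $\mathcal{L}_\mu(\x,\lambda) = c(\x) + \lambda^T h(\x) + \tfrac{\mu}{2}\|h(\x)\|_2^2$, while \eqref{eq:constrained_diffusion:lambda} is dual ascent. I would first show that every equilibrium $(\x^*,\lambda^*)$ satisfies $h(\x^*) = 0$ by \eqref{eq:constrained_diffusion:lambda}; the penalty term then drops out of \eqref{eq:constrained_diffusion:x}, leaving $\nabla c(\x^*) + \nabla h(\x^*)^T\lambda^* = 0$. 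These are precisely the first-order KKT conditions, so the equilibria of the flow coincide with the KKT points of \eqref{eq:simple_constrained_opt}, and the stationary law of \eqref{eq:constrained_diffusion:x} at frozen $(\lambda,\sigma)$ is the Gibbs measure $\propto \exp(-\mathcal{L}_\mu(\x,\lambda)/\sigma^2)$.

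Next I would pin down $\mu^*$ and $R$ through a local stability analysis at a KKT point satisfying the second-order sufficiency and constraint-qualification conditions. The central fact, standard in augmented Lagrangian theory \cite{nocedal1999numerical}, is that when $\nabla^2_{\x}\mathcal{L}_0(\x^*,\lambda^*)$ is positive definite on $\ker \nabla h(\x^*)$ and $\nabla h(\x^*)$ has full row rank, there is a threshold $\mu^*$ above which the augmented Lagrangian Hessian $\nabla^2_{\x}\mathcal{L}_0(\x^*,\lambda^*) + \mu\,\nabla h(\x^*)^T\nabla h(\x^*)$ is positive definite on all of $\mathbb{R}^n$. Linearizing the joint $(\x,\lambda)$ flow then yields a Jacobian whose primal block is this (negated) Hessian and whose off-diagonal blocks couple the variables through $\nabla h$; following \cite{platt1987constrained} I would verify that for $\mu > \mu^*$ every eigenvalue has strictly negative real part, so the equilibrium is locally asymptotically stable. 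This both defines the basin of attraction $R$ and certifies that $\x^*$ is a strict local minimizer of $\mathcal{L}_\mu(\cdot,\lambda^*)$.

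With this in hand I would reintroduce the noise. Inside $R$ the dual variable converges deterministically, $\lambda_t \to \lambda^*$, so the primal equation is asymptotically the annealed Langevin diffusion for the locally strongly convex objective $\mathcal{L}_\mu(\cdot,\lambda^*)$, whose time-varying target $\propto \exp(-\mathcal{L}_\mu(\x,\lambda^*)/\sigma(t)^2)$ concentrates on $\x^*$ as $\sigma(t)\to 0$. Invoking Theorem~1 on this effectively unconstrained penalized objective gives $c(\X_t)\to c(\x^*)$ in probability, which is the local claim. For the global claim I would use the penalty-method limit: as $\mu\to\infty$ the dual term is dominated by $\tfrac{\mu}{2}\|h\|_2^2$, so the effective objective approaches the quadratic penalty $c(\x)+\tfrac{\mu}{2}\|h(\x)\|_2^2$ whose global minimizers converge to the global solution of \eqref{eq:simple_constrained_opt} \cite{nocedal1999numerical}; since Theorem~1 concentrates the annealed law on the \emph{global} minimizer of this objective and the basin $R$ expands to fill the feasible neighborhood, the local result upgrades to global convergence in probability.

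The main obstacle is reconciling the annealed primal diffusion with the fact that noise enters only \eqref{eq:constrained_diffusion:x}, leaving \eqref{eq:constrained_diffusion:lambda} a noiseless ODE. The joint process $(\X_t,\lambda_t)$ is therefore a degenerate, non-reversible diffusion, so the classical simulated-annealing convergence theory that underlies Theorem~1 does not transfer verbatim. The hypothesis that $\X_t$ stays bounded in $R$ removes the need to control the exit probability for the local claim, but I still expect the crux to be a separation-of-timescales argument: treating $\lambda_t$ as a slowly drifting parameter that tracks $\lambda^*$ while the fast, noisy primal variable equilibrates to the Gibbs measure, and then showing the residual coupling error vanishes as $\sigma(t)\to 0$ so that the adiabatic approximation and the annealing limit can be taken consistently.
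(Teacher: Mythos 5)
Your deterministic skeleton is sound and shares the paper's key ingredients: equilibria of \eqref{eq:constrained_diffusion} are exactly the points satisfying the Lagrangian conditions \eqref{eq:lagrange_conditions}, the threshold $\mu^*$ comes from augmented-Lagrangian theory and Platt-style stability analysis \cite{platt1987constrained, nocedal1999numerical}, and the $\mu \to \infty$ claim is dispatched by the classical quadratic-penalty limit --- all of which matches the paper. The genuine gap is in how you reintroduce the noise. Your plan is to freeze $(\lambda, \sigma)$, identify the Gibbs law $\propto \exp(-\mathcal{L}_\mu(\x,\lambda)/\sigma^2)$ of the primal equation, assert that $\lambda_t \to \lambda^*$ ``deterministically,'' and then invoke Theorem~1 on the effectively unconstrained objective $\mathcal{L}_\mu(\cdot,\lambda^*)$. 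But $\lambda_t$ is not deterministic --- it integrates $\mu h(\X_t)$ along the noisy primal path --- and because noise enters only \eqref{eq:constrained_diffusion:x}, the joint $(\X_t,\lambda_t)$ process is degenerate and possesses no joint Gibbs stationary law to which a concentration result like Theorem~1 can be applied. Moreover, nothing in \eqref{eq:constrained_diffusion} makes the dual variable slow relative to the primal: both drifts are $O(1)$ (the dual drift is even scaled by $\mu$), so there is no small parameter on which an averaging or adiabatic theorem could operate. You correctly identify this coupling as ``the crux,'' but a proposal that defers the crux to an unexecuted separation-of-timescales argument has not closed the proof.

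It is worth seeing how the paper's own sketch sidesteps this obstacle entirely. Rather than freezing $\lambda$ and reasoning about stationary measures, it forms a single scalar functional of the \emph{coupled} state, $E_t = \tfrac{1}{2} v(\X_t)^T v(\X_t) + \tfrac{1}{2} h(\X_t)^T h(\X_t)$, where $v$ is the primal drift, and observes that $E_t = 0$ is equivalent to \eqref{eq:lagrange_conditions}. Applying It\^{o}'s lemma to $E_t$, the noise appears only as a trace correction of order $\sigma(t)^2$ plus a martingale term, both of which vanish under the annealing schedule; the surviving drift is Platt's deterministic identity $dE_t = -v^T A v\, dt$ with $A$ given by \eqref{eq:energy_matrix}, and positive definiteness of $A$ on the region $R$ for $\mu > \mu^*$ drives $E_t \to 0$. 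Because $E_t$ is a Lyapunov function for the joint primal--dual flow, the $\lambda$-dynamics are already folded into its decay, and no adiabatic decoupling is ever required. If you want to salvage your Gibbs-measure route, you would need convergence theory for annealed \emph{degenerate} (hypoelliptic, non-reversible) diffusions, in the spirit of the simulated-annealing literature \cite{bras2024convergence}; the energy-function argument is the shorter path, and it is the one the theorem's phrasing --- the region $R$ and the threshold $\mu^*$ --- is built around.
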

\begin{proof}[Proof (sketch)]
    Recall that the Lagrangian optimality conditions for \eqref{eq:simple_constrained_opt} are given by
    \begin{subequations}\label{eq:lagrange_conditions}
    \begin{align}
        & \partial \mathcal{L}(\x, \lambda) / \partial \x = 0, \\
        & \partial \mathcal{L}(\x, \lambda) / \partial \lambda = 0,
    \end{align}
    \end{subequations}
    where $\mathcal{L}(\x, \lambda) = c(\x) + \lambda^T h(\x)$ is the Lagrangian. We will show that the diffusion process \eqref{eq:constrained_diffusion} converges in probability to values satisfying these conditions.
    Let
    \begin{equation}
        v(\X_t) = \nabla_{\x} c(\X_t) + \lambda_t^T \nabla_{\x} h(\X_t) + \mu \nabla_{\x} h(\X_t)^T h(\X_t)
    \end{equation}
    denote the ``velocity'' of the decision variables in \eqref{eq:constrained_diffusion}. Then define a corresponding ``energy''
    \begin{equation*}
        E_t = e(\X_t) = \frac{1}{2}v(\X_t)^T v(\X_t) + \frac{1}{2}h(\X_t)^T h(\X_t),
    \end{equation*}
    which is composed of a ``kinetic'' followed by a ``potential'' energy term. Note that $E_t$ is always non-negative, and $E_t = 0$ enforces the Lagrangian optimality conditions \eqref{eq:lagrange_conditions}.
   
    By It\^{o}'s lemma, we have that
    \begin{multline}
        dE_t = \left[\nabla_{\x} e(\X_t)^T v(\X_t) + \frac{1}{2}Tr[\sigma(t)^2 \nabla_{\x}^2 e(\X_t)]\right]dt \\
        + \nabla_{\x} e(\X_t)^T \sigma(t) dW_t,
    \end{multline}
    where $\nabla_{\x}^2 e(\X_t)$ denotes the Hessian of the energy function. As $t \to \infty$, the drift term dominates and we have
    \begin{equation}
        dE_t = \nabla_{\x} e(\X_t)^T v(\X_t) dt,
    \end{equation}
    since $\sigma(t) \to 0$. This drift term is then given \cite{platt1987constrained} by
    \begin{equation}
        dE_t = - v(\X_t) ^T A v(\X_t),
    \end{equation}
    where 
    \begin{equation}\label{eq:energy_matrix}
        A_{ij} = \frac{\partial^2 c}{\partial \x_i \partial \x_j} + \lambda \frac{\partial^2 h}{\partial \x_i \partial \x_j} + \mu \frac{\partial h}{\partial \x_i} \frac{\partial h}{\partial \x_j} + \mu h \frac{\partial^2 h}{\partial \x_i \partial \x_j}.
    \end{equation}
    The matrix $A$ is positive definite in some region $R$ surrounding constrained minima for all $\mu > \mu^*$ \cite{platt1987constrained}, providing convergence in probability to $E_t = 0$, and thus to a constrained minimizer.

    The case where $\mu \to \infty$ follows similarly from the well-known optimization result that minimizers of
    \begin{equation}
        c(x) + \frac{\mu}{2} \|h(\x)\|_2^2
    \end{equation}
    converge globally to local solutions of \eqref{eq:simple_constrained_opt} as $\mu \to \infty$ \cite{nocedal1999numerical}.
\end{proof}

While Theorem~\ref{thm:convergence} only guarantees global convergence as $\mu \to \infty$, the Lagrange multipliers $\lambda$ allow us to achieve reliable convergence with much smaller penalties in practice, as illustrated in the following section. This improves the numerical conditioning of the problem, allowing for a larger step-size ($\alpha$) when simulating the SDE \eqref{eq:constrained_diffusion}.

\section{Software implementation in JAX}\label{sec:software}

We provide a prototype implementation\footnote{\texttt{\url{https://github.com/vincekurtz/drax}}} of our proposed approach in JAX, a software framework for massively parallel scientific computing on GPUs and other hardware accelerators \cite{jax2018github}. JAX supports forward and reverse-mode automatic differentiation on numpy arrays. This means that users need only define the discrete-time system dynamics $x_{k+1} = f(x_k, u_k)$, a running cost $\ell(x_k, u_k)$, and a terminal cost $\phi(x)$, as shown in the following code snippet:

\begin{lstlisting}[language=Python]
class PendulumSwingup(OptimalControlProblem):
    ...
    def dynamics(self, x, u):
        theta, theta_dot = x
        tau = u[0]
        ...
        xdot = jnp.array([theta_dot, theta_ddot])
        return x + self.dt * xdot

    def running_cost(self, x, u):
        return self.dt * 0.01 * jnp.sum(u**2)

    def terminal_cost(self, x):
        theta, theta_dot = x
        return 10 * theta**2 + 1 * theta_dot**2
\end{lstlisting}

Our software translates such a direct trajectory optimization problem \eqref{eq:ocp} into an equality-constrained NLP \eqref{eq:nlp}. We then solve the optimization by simulating \eqref{eq:constrained_diffusion} with Euler-Maruyama discretization, geometrically decaying noise level $\sigma(t)$, and user-specified penalty coefficient $\mu$. We enforce bound constraints \eqref{eq:nlp:bound} with a simple log-barrier \cite{fishman2024diffusion}.

Here we consider two example nonlinear trajectory optimization problems. The first, an inverted pendulum, is tasked with swinging to an upright position. The dynamics are
\begin{equation}
    x_{k+1} = x_{k} + \delta t 
    \begin{bmatrix}
        \dot{\theta}_k \\
        \left(u_k - mgl\sin(\theta_k - \pi) \right) / (m l^2)
    \end{bmatrix},
\end{equation}
where $x_k = [\theta_k, \dot{\theta}_k]$ is composed of the angle and angular velocity of the pendulum, which has mass $m$, length $l$ and gravitational acceleration $g$. We denote use a timestep of $\delta t = 0.1$ seconds. Strict torque limits ($u_k \in [-1, 1]$) require the pendulum to swing back and forth before reaching the upright.

A phase portrait of the solution is shown in Fig.~\ref{fig:pendulum_phase_portrait}. The process starts with an infeasible initial guess (top), with $x_k$ scattered at random. During the diffusion, \eqref{eq:constrained_diffusion} drives $\x$ to both feasibility and optimality, resulting in a smooth and dynamically feasible swingup trajectory (middle) where control torques alternate between upper and lower limits (bottom).

\begin{figure}
    \centering
    \includegraphics[width=0.9\linewidth]{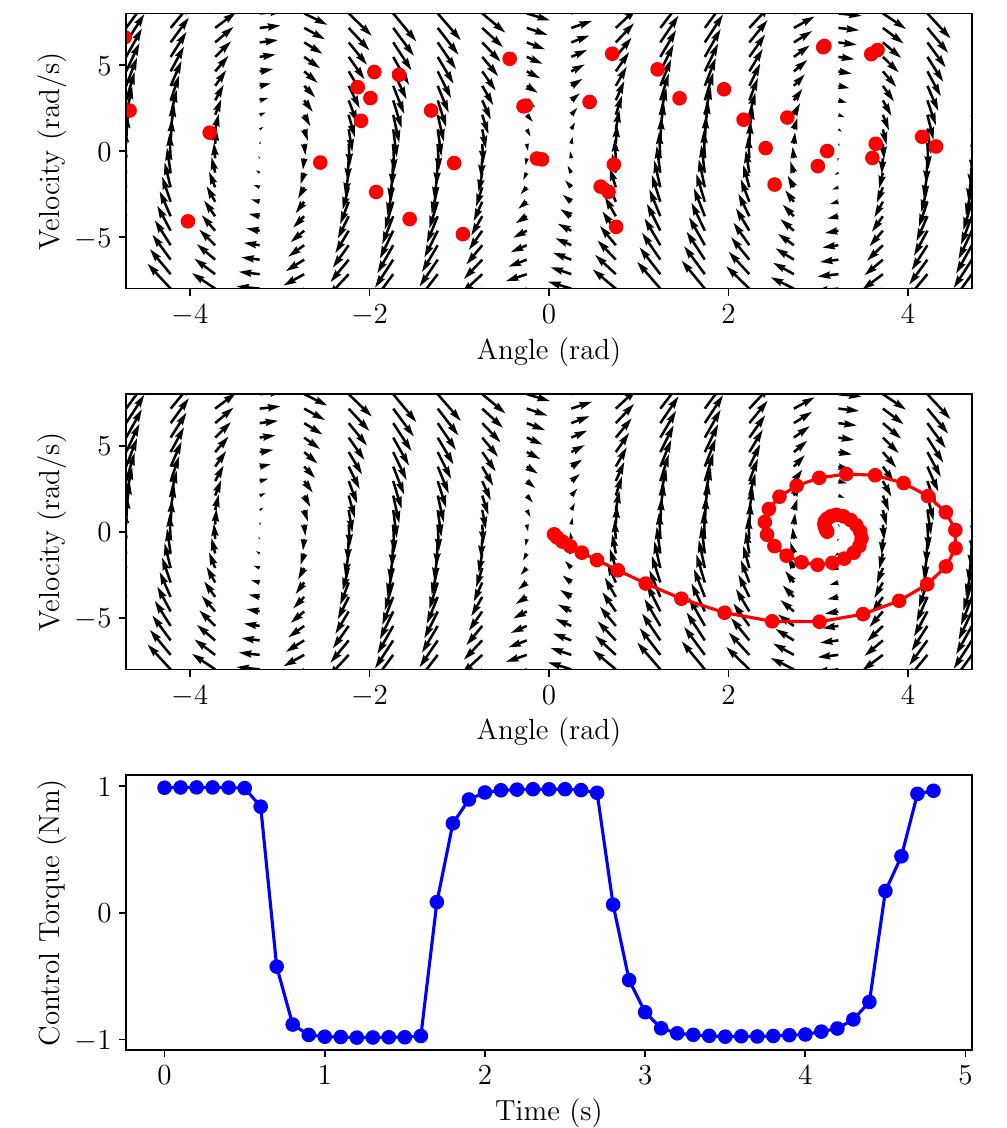}
    \caption{Dynamically infeasible initial guess (top), optimized trajectory (middle), and optimized control sequence (bottom) for a pendulum swingup trajectory generated with our proposed approach. The Langevin SDE \eqref{eq:constrained_diffusion} drives the samples to both feasibility and optimality during the diffusion process, without requiring feasibility at each iteration.}
    \label{fig:pendulum_phase_portrait}
\end{figure}

Fig.~\ref{fig:pendulum_convergence} plots convergence for various penalty values $\mu$. Specifically, we plot the constraint violation $\|h(\x)\|_2^2$ at each iteration. With very low values ($\mu = 0.01$), the diffusion process does not converge to a dynamically feasible solution. This supports the theory behind Theorem~\ref{thm:convergence}, as there is a minimum $\mu^*$ that enforces positive definiteness of the energy matrix $A$ in the neighborhood of constrained minima \eqref{eq:energy_matrix}. Nonetheless, even modest values of $\mu$ above this threshold are sufficient to drive \eqref{eq:constrained_diffusion} to a feasible solution.

\begin{figure}
    \centering
    \includegraphics[width=0.9\linewidth]{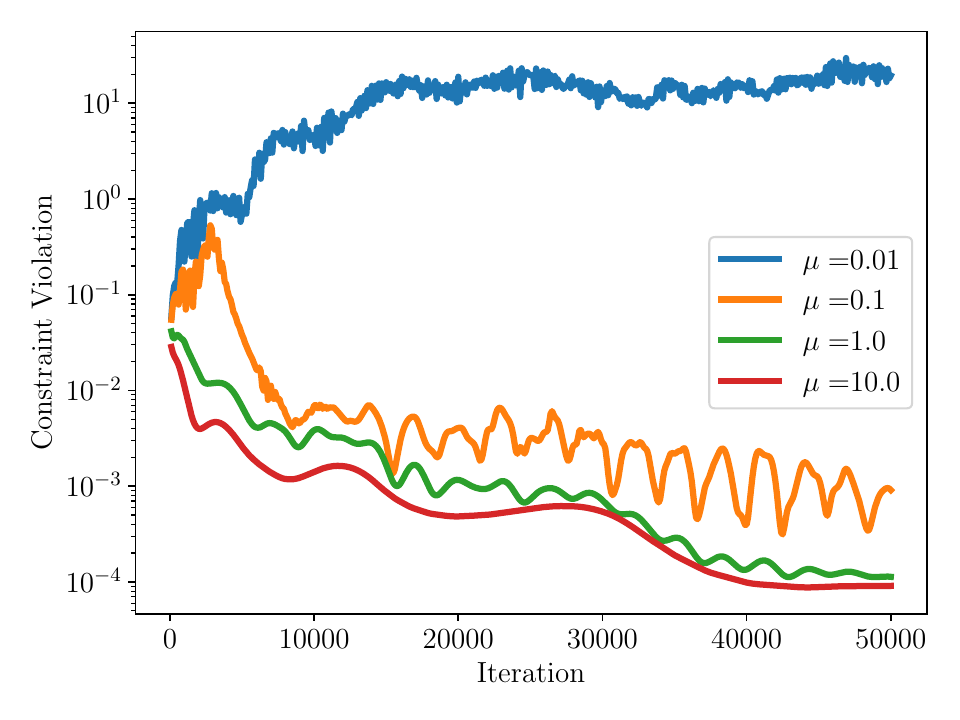}
    \caption{Equality constraint violations for the pendulum swingup task with various penalty parameters $\mu$. While the lowest $\mu$ value does not converge to a dynamically feasible solution, relatively small values of $\mu$ are sufficient to enforce dynamic feasibility by the end of the diffusion process, thanks to the contribution of the Lagrange multipliers in \eqref{eq:constrained_diffusion:lambda}.}
    \label{fig:pendulum_convergence}
\end{figure}

In the second example, a planar robot with unicycle dynamics is tasked with driving around a U-shaped obstacle to reach a goal (Fig.~\ref{fig:bug_trap}). The system dynamics are
\begin{equation}
    x_{k+1} = x_k + \delta t \begin{bmatrix}
        v_k \cos(\theta_k) \\
        v_k \sin(\theta_k) \\
        \omega_k
    \end{bmatrix},
\end{equation}
where $x_k = [p^x_k, p^y_k, \theta_k]$ is the position and orientation of the vehicle and $u_k = [v_k, \omega_k]$ are linear and angular velocity commands. The terminal cost encourages the robot to reach a goal position (green star), while the running cost imposes a harsh penalty on obstacle proximity (red shading).
This example is particularly difficult because it induces a large local minimum where the robot is stuck inside the maze. Standard optimization methods like gradient descent and BFGS get stuck in this local minimum, as illustrated in Fig.~\ref{fig:bug_trap}, while our direct diffusion method does not.

\begin{figure*}
    \centering
    \begin{subfigure}{0.3\linewidth}
        \includegraphics[width=\linewidth]{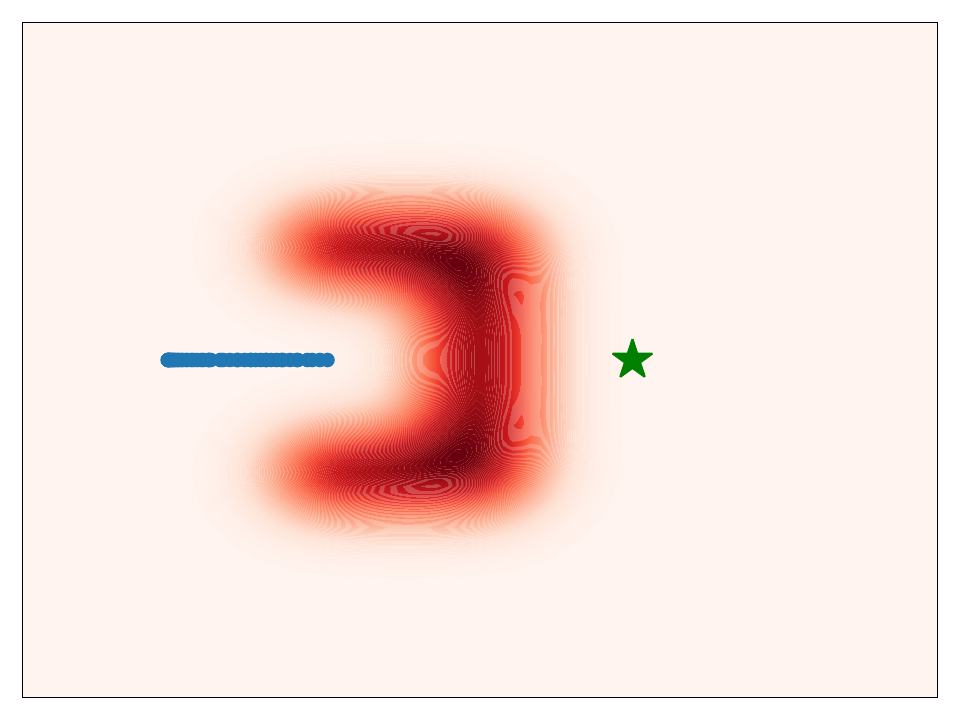}
        \caption{Gradient Descent \cite{platt1987constrained}}
        \label{fig:bug_trap:gd}
    \end{subfigure}
    \begin{subfigure}{0.3\linewidth}
        \includegraphics[width=\linewidth]{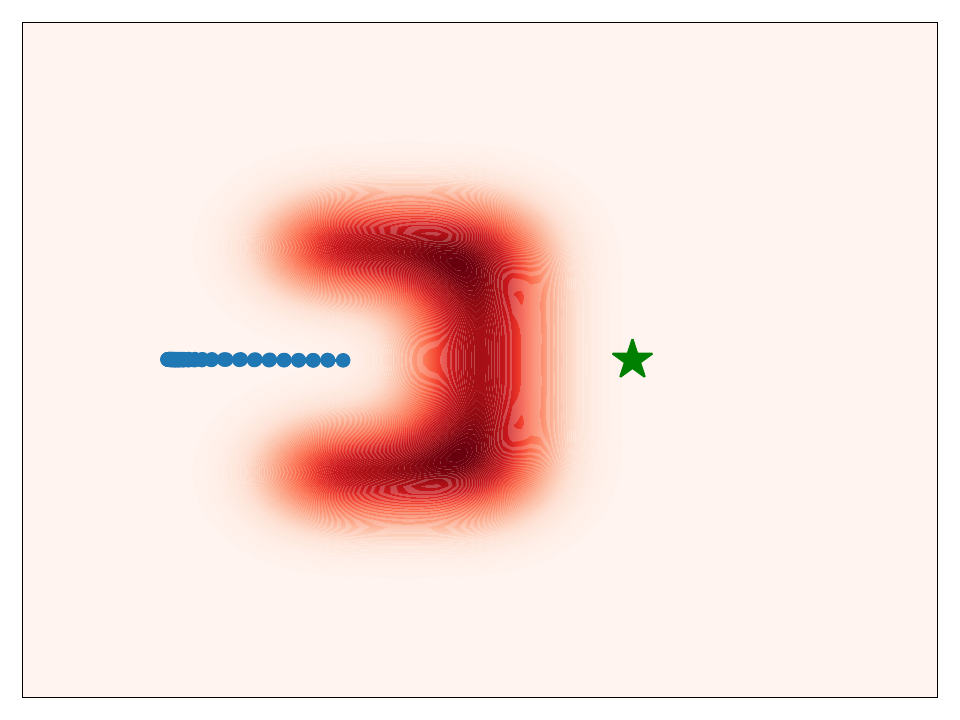}
        \caption{Quasi-Newton (BFGS) \cite{nocedal1999numerical}}
        \label{fig:bug_trap:bfgs}
    \end{subfigure}
    \begin{subfigure}{0.3\linewidth}
        \includegraphics[width=\linewidth]{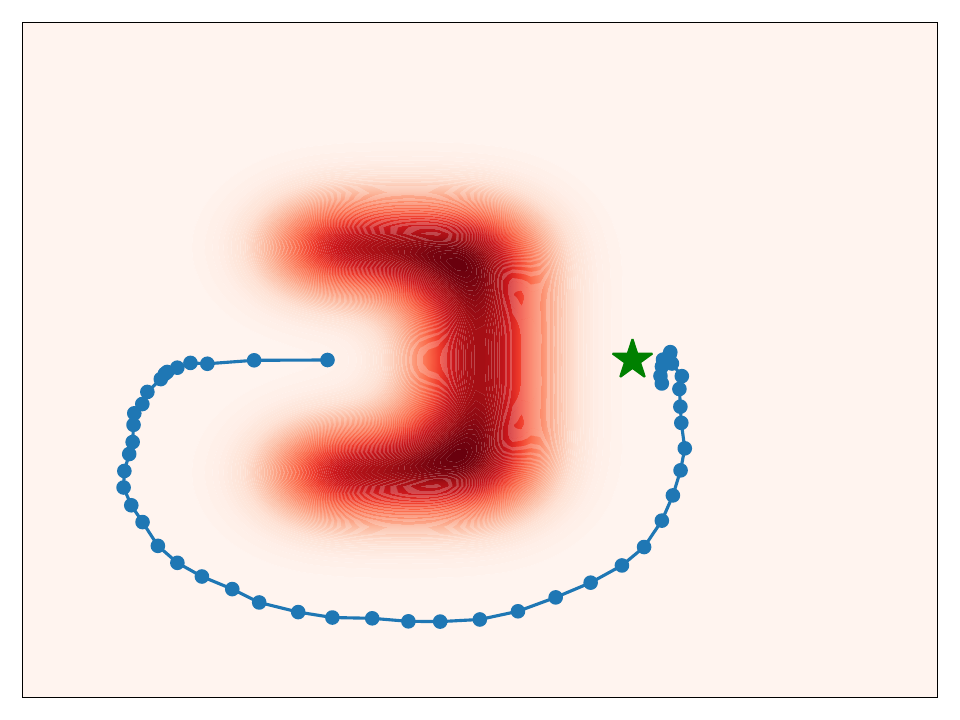}
        \caption{Direct Diffusion (ours)}
        \label{fig:bug_trap:ours}
    \end{subfigure}
    \caption{A planar robot with unicycle dynamics is tasked with escaping a U-shaped maze (red) to reach a goal position (green star). Diffusion-based direct optimization escapes the local minimum, while BFGS and gradient descent do not.}
    \label{fig:bug_trap}
\end{figure*}

Interestingly, recent work on shooting-based diffusion also reports vulnerability to such local minima \cite[Fig.~4b]{pan2024model}. We hypothesize that the structure of the direct optimization problem, where early iterates are allowed to explore infeasible solutions, makes it easier to overcome such challenges. 

Finally, another advantage of our approach is that it involves only simple, GPU-friendly operations that can be trivially parallelized: the SDE \eqref{eq:constrained_diffusion} does not require matrix factorizations or other complex linear algebra operations that tend to be slow on hardware accelerators. This, combined with JAX support for automatic vectorization and parallelization, makes it easy to solve many problems in parallel. 

We solve the pendulum swingup problem in parallel from $N$ different initial conditions, and report total solve times (after 20,000 iterations) for various $N$ in Table~\ref{tab:parallel_solve_times}. While increasing $N$ does increase solve times modestly due to computational overhead, we are able to solve several thousand problem instances in a matter of seconds. All experiments were performed on an Nvidia RTX 4070 desktop GPU. 

\begin{table}[]
    \centering
    \begin{tabular}{c|c c c c c c c}
         $N$      & 64 & 256 & 1024 & 2048 & 4096 & 8192 & 16384 \\
         \hline
         Time (s) & 0.68 & 0.75 & 1.04 & 1.49 & 2.12 & 3.53 & 7.95
    \end{tabular}
    \caption{Pendulum solve times from $N$ randomized initial conditions.}
    \label{tab:parallel_solve_times}
\end{table}

\section{Discussion and Limitations}\label{sec:discussion}

A key advantage of direct diffusion-based trajectory optimization appears to be more reliable convergence to better local minima. While Theorem~\ref{thm:convergence} establishes conditions for convergence to a local optimum, proving global optimality---or more realistically, the conditions under which global optimality is possible---remains a significant challenge. Nonetheless, a rich literature on Langevin simulating annealing \cite{bras2024convergence} may provide a promising starting point for doing so.

Practically speaking, the need for dynamics gradients (e.g., $\nabla_x f(x, u)$) presents a significant limitation. It is difficult to obtain accurate analytical gradients for many systems of interest, especially mechanical systems with contacts. Fortunately, sampling-based gradient estimation \cite{suh2022differentiable} provides a straightforward alternative: our prototype JAX implementation includes support for such estimates. An interesting open question is how sampling-based gradient estimates compare to the Monte-Carlo score estimates used in sampling-based trajectory optimization methods like MPPI \cite{xue2024full}.

Diffusion-based trajectory optimization, whether direct or shooting-based, is not particularly efficient compared to standard NLP solvers. In large part this is due to a lack of second-order information: modern NLP solvers take large Newton-style steps, enabling rapid convergence compared to gradient descent \cite{nocedal1999numerical}. Can we leverage second-order information to accelerate the diffusion process, while maintaining the desirable properties of the direct diffusion framework? Unfortunately, straightforward application of quasi-Newton methods like BFGS is not feasible, as the additional noise in the diffusion process breaks the step-to-step Hessian approximation. 

The choice of annealing schedule $\sigma(t)$ provides another important area for future research. We found that a geometrically decaying sequence worked fairly well, but there may be other more desirable choices. Could the noise level be adapted on the fly? Could we generalize $\sigma$ to a full-rank positive definite matrix $\Sigma$, resulting in an optimization procedure akin to pre-conditioned Langevin sampling \cite{titsias2024optimal}?

Finally, combining these results with learning-based diffusion methods presents a promising area for future work. For instance, can we combine diffusion-based direct trajectory optimization with expert demonstrations, as in \cite{pan2024model}? Could we learn scores associated with the dynamics constraints, then re-use them for different tasks/costs? Regardless of the approach, the simple algebra and embarrassingly parallel nature of direct diffusion-based trajectory optimization makes it a promising candidate for integration with learning methods. 

\section{Conclusion}\label{sec:conclusion}

We presented a diffusion-based method for direct trajectory optimization, where equality constraints are enforced via Lagrange multipliers that flow alongside the decision variables in an extended Langevin diffusion process. In addition to characterizing the convergence properties of this algorithm, we presented several numerical examples and introduced an open-source implementation in JAX. Future work will focus on extending these methods to the online MPC setting, principled incorporation of sampling-based score estimates, and acceleration with second-order information.

\bibliographystyle{IEEEtran}
\bibliography{references}

\end{document}